\documentclass{SCIS2025}
\usepackage{times}  
\usepackage{helvet}  
\usepackage{courier}  
\usepackage{graphicx} 
\usepackage{caption} 
\bibliographystyle{plainnat}
\usepackage{algorithm}
\usepackage{algorithmic}
\usepackage{newfloat}
\usepackage{listings}
\usepackage{url}            
\usepackage{booktabs}       
\usepackage{graphicx}
\usepackage{amsfonts}       
\usepackage{nicefrac}       
\usepackage{microtype}      
\usepackage{xcolor}         
\usepackage{multirow}
\usepackage{multicol}
\usepackage{bm}
\usepackage{algorithm}
\usepackage{algorithmic}
\usepackage{amsmath}
\usepackage{cleveref}
\usepackage{caption}
\begin{document}
\ArticleType{RESEARCH PAPER}
\Year{2025}
\Month{}
\Vol{68}
\No{}
\DOI{}
\ArtNo{000000}
\ReceiveDate{}
\ReviseDate{}
\AcceptDate{}
\OnlineDate{}
\title{A Smooth Transition Between Induction and Deduction: Fast Abductive Learning Based on Probabilistic Symbol Perception}{A Smooth Transition Between Induction and Deduction: Fast Abductive Learning Based on Probabilistic Symbol Perception}
\author[]{Lin-Han Jia}{}
\author[]{Si-Yu Han}{}
\author[]{Lan-Zhe Guo}{{guolz@lamda.nju.edu.cn}}
\author[]{Zhi Zhou}{}
\author[]{Zhao-Long Li}{}
\author[]{Yu-Feng Li}{{liyf@lamda.nju.edu.cn}}
\author[]{Zhi-Hua Zhou}{}
\AuthorMark{Lin-Han Jia}
\AuthorCitation{Lin-Han Jia, Si-Yu Han, Lan-Zhe Guo, Zhi Zhou, Zhao-Long Li, Yu-Feng Li and Zhi-Hua Zhou}
\address[]{National Key Laboratory for Novel Software Technology, Nanjing University, Nanjing 210023, China}
\abstract{Abductive learning (ABL) that integrates strengths of machine learning and logical reasoning to improve the learning generalization, has been recently shown effective. However, its efficiency is affected by the transition between numerical induction and symbolical deduction, leading to high computational costs in the worst-case scenario. Efforts on this issue remain to be limited. In this paper, we identified three reasons why previous optimization algorithms for ABL were not effective: insufficient utilization of prediction, symbol relationships, and accumulated experience in successful abductive processes, resulting in redundant calculations to the knowledge base. To address these challenges, we introduce an optimization algorithm named as Probabilistic Symbol Perception (PSP), which makes a smooth transition between induction and deduction and keeps the correctness of ABL unchanged. We leverage probability as a bridge and present an efficient data structure, achieving the transfer from a continuous probability sequence to discrete Boolean sequences with low computational complexity. Experiments demonstrate the promising results.}
\keywords{Abductive Learning, Neuro-Symbolic Learning, }
\maketitle
\section{Introduction}
The relationship between empirical inductive learning and rational deductive logic has been a frequently discussed philosophical question throughout human history. In recent years, this issue has also emerged in the field of artificial intelligence  (AI), manifesting as the challenge of integrating machine learning and logical reasoning, two relatively independent technologies.

Many efforts have focused on this integration issue.
Neuro-symbolic (NeSy) learning ~\cite{mao2019neuro} proposes to enhance neural networks with symbolic reasoning. However, it requires lots of labeled training data and is difficult to extrapolate. Probabilistic Logic Program (PLP) ~\cite{de2015probabilistic} is a heavy-reasoning light-learning way because the most workload is to be finished by logical reasoning though some elements of machine learning are introduced. Statistical Relational Learning (SRL) ~\cite{getoor2007introduction} is a heavy-learning light-reasoning way in opposite. DeepProbLog ~\cite{manhaeve2018deepproblog}, which unifies probabilistic logical inference with neural networks but with exponential complexity of probabilistic distributions on the Herbrand base.
 
In order to better integrate the advantages of both fields, Abductive Learning (ABL) ~\cite{zhou2019abductive} is introduced to allow to infer labels that are consistent with some prior knowledge by reasoning over high-level concepts. It is a recent generic and effective framework that bridges any kind of machine learning algorithms and logical reasoning by minimizing the inconsistency between the pseudo labels obtained from machine learning and logical reasoning. The inconsistency value is calculated by a designed distance function.

However, the efficiency in previous ABL studies as well as NeSy approaches is affected by the transition between numerical induction and symbolical deduction, leading to high computational costs in the worst-case scenario. A smooth transition to bridge the calculations in two fields will be important. In ABL, the logic reasoning module takes the unreliable parts of the symbols predicted by the machine learning model as variables, while treating the reliable parts as constants. Connecting the rules in the knowledge base, it then uses the reliable parts to infer the corrected symbols for the unreliable parts and feeds these back to the machine learning model for updating. 
Therefore, this can be seen as an \textit{optimization problem}, where the optimization variables are Boolean variables that determine which predicted symbols should be variables and the others should be constants in reasoning. The optimization objective is the number of final correct logical reasoning results. In fact, the essence of this optimization problem is to simulate human meta-reasoning ability based on symbol systems, which is also called symbol sensitivity, or abstract perception.

Due to the fact that this optimization problem involves both numerical and symbolical values, there is still a lack of efficient optimization techniques. It is worth mentioning that throughout the optimization process, the number of executions in machine learning does not correspond one-to-one mapping with the number of logical reasoning executions, but rather \textit{one-to-many}, resulting in logical reasoning occupying most of the time, as shown in \cref{Abductive}, which typically requires hundreds of logical reasoning to correspond to a single machine learning process. To this end, significantly reducing the number of attempts on Boolean variables in logical reasoning denoted as $T_{ac}$ can help improve the efficiency of ABL and potentially even enhance performance.
\begin{figure*}[htbp]
\vskip 0.2in
\begin{center}
\centerline{\includegraphics[scale=0.45]{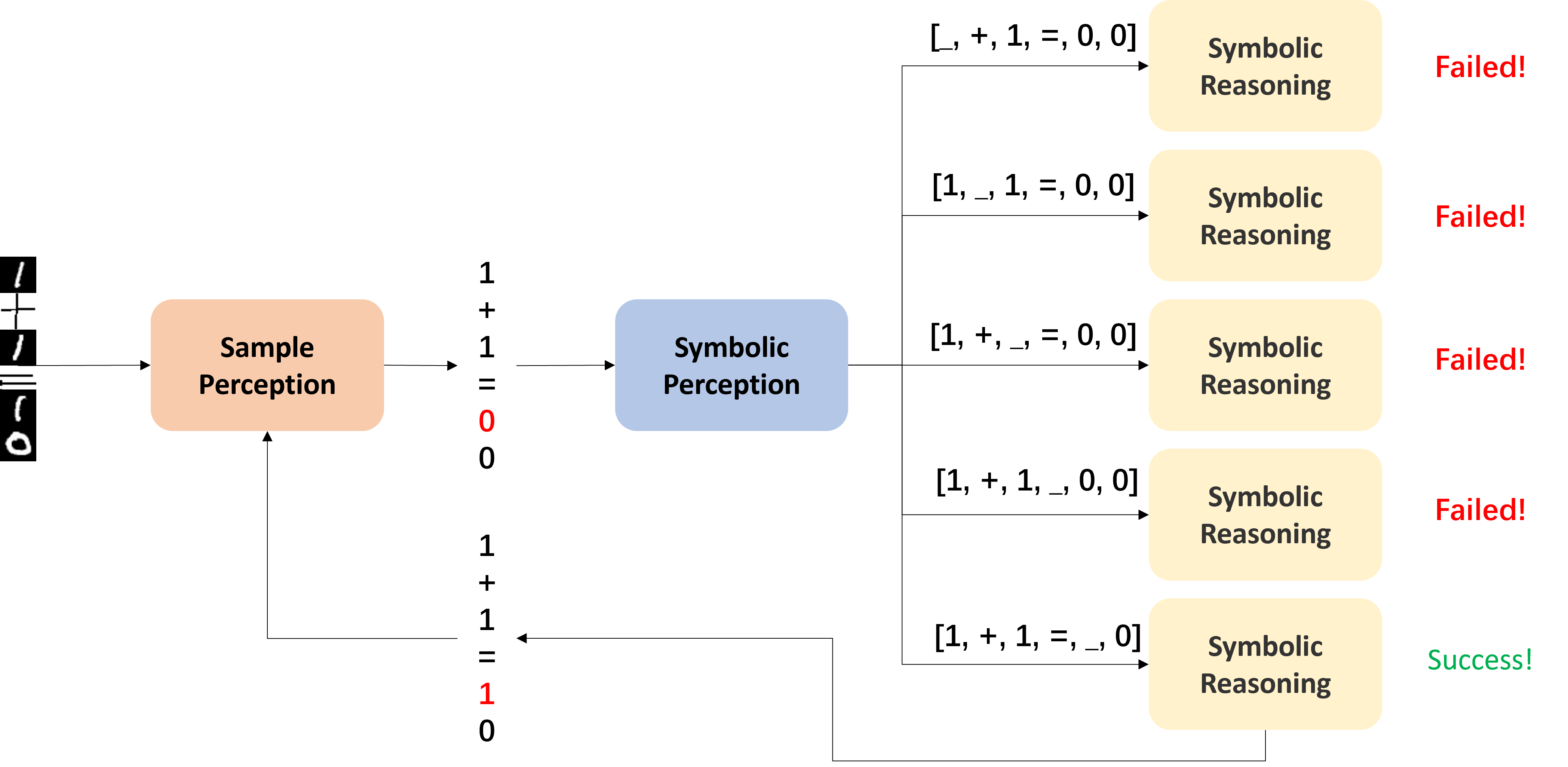}}
\caption{The process diagram of ABL reveals a one-to-many relationship between the inductive learning and logical reasoning modules.}
\label{Abductive}
\end{center}
\vskip -0.2in
\end{figure*}

In order to address this optimization problem, ABL used general gradient-free optimization methods, such as search-based algorithms POSS ~\cite{NIPS2015_b4d168b4, Liu_2022} and sample-based algorithms RACOS ~\cite {yu2016derivative}. All of these are different from the way human experts combine induction and reasoning, and perform many ineffective operations, leading to resource waste. Specifically, compared to human experts, the reasons for the inefficiency of existing algorithms are related to three aspects: starting in a random state without utilizing predictive information, failing to utilize the relationships between symbols, and performing zero experience optimization every time without accumulating successful ABL process experience.

We propose an optimization algorithm for ABL named Probabilistic Symbol Perception (PSP) that can respond quickly with few trial and error attempts. The intermediate process from machine learning to logical reasoning inevitably involves a transition from continuous to discrete. Probability is a natural tool because its values are continuous, but it represents the degree to which each discrete event occurs, making it the smoothest transition. Therefore, our algorithm consists of two steps: the first step involves a neural network capable of processing sequences, which takes probability predictions from machine learning models as input. It combines the structural information of the symbol sequence and outputs the probability that each position in the output sequence needs to be corrected as a variable. The second step involves an algorithm for generating $T_{ac}$ discrete Boolean sequences from the continuous probability sequence, determining which symbols are variables in logical reasoning. In the case of a sequence length of $l$, the complexity of traditional calculation methods is $O(2^l\log 2^l)$ in the worst-case scenarios. In contrast, the proposed new algorithm only requires $O (l \log l+T_{ac} \log T_{ac}+lT_{ac})$ time complexity. Similar to traditional ABL, we keep the strict correctness to be unchanged. 
In addition, we pass the Boolean sequence corresponding to the optimal result to the sequence neural network, enabling it to accumulate successful experiences.
\textbf{Our contribution.} 
Our contribution can be summarized in three main aspects. First, we identified the core issue that leads to slow ABL optimization: reducing the number of attempts on Boolean variables in logical reasoning.

We also identified three main shortcomings of past optimization algorithms. Second, we propose a solution to alleviate these drawbacks with low complexity. Finally, experiments demonstrate the promising results of our proposal in terms of good accuracy with reduced attempts.


\section{Related Works}
\subsection{Integration of Perception and Reasoning} 
 %
Bridging machine learning and logical reasoning is a well-known holy grail problem in artificial intelligence. NeSy learning ~\cite{mao2019neuro} proposes to enhance machine learning with symbolic reasoning. It tries to learn the ability for both perception from the environment and reasoning from what has been perceived. However, it requires lots of labeled training data and is difficult to extrapolate. PLP ~\cite{de2015probabilistic} extends first-order logic (FOL) to accommodate probabilistic groundings and conduct probabilistic inference. SRL ~\cite{getoor2007introduction} tries to construct a probabilistic graphical model based on domain knowledge expressed in FOL. PLP and SRL are different from the way human beings solve problems as human beings can use perception and reasoning seamlessly while PLP and SRL focus on one side more. DeepProbLog ~\cite{manhaeve2018deepproblog} unifies probabilistic logical inference with neural network training by gradient descent. However, the probabilistic inference in these methods could be inefficient for complicated tasks because of exponential complexity. Formal Logic Deduction (FLD) ~\cite{morishita2023learning} tries to grant language models with reasoning abilities through logic system, where the language model is the main body of the system. Multi-layer perception mixer model (MLP-Mixer) ~\cite{amayuelas2022neural} is quite the opposite, utilizing neural networks to study logic.

Different from the works above, ABL tries to bridge machine learning and logical reasoning in a mutually beneficial way. Zhou first proposed the concept of ABL ~\cite{zhou2019abductive}. Dai and Zhou elaborated the framework of ABL and applied ABL to the task of recognizing handwritten formulas with good results ~\cite{dai2019bridging}. Huang developed a similarity-based consistency measure for abduction called ABLSim ~\cite{huang2021fast}, which takes the idea that samples in the same category are similar in feature space. Huang and Li presented an attempt called SS-ABL ~\cite{huang2020semi} which combines semi-supervised learning and abductive learning and applied it to theft judicial sentencing with good results. In order to alleviate the problem of low efficiency and high cost of the knowledge base in ABL, Huang presented ABL-KG ~\cite{huang2023enabling} which enables abductive learning to exploit general knowledge graph. In recent years, this field has made some research advancements.
\subsection{Derivative-Free Optimization} 
Most of the optimization algorithms used in ABL are Derivative-Free Optimization. Derivative-free optimization algorithms are a class of optimization methods that do not rely on gradient information to find the minimum or maximum of a function. RACOS ~\cite{yu2016derivative} is a proposed classification-based derivative-free optimization algorithm. Unlike other derivative-free optimization algorithms, the sampling region of RACOS is learned by a simple classifier. Two improving methods mentioned in ZOOpt are SRACOS ~\cite{hu2017sequential} and ASRACOS ~\cite{liu2019asynchronous},  respectively are the sequential and asynchronous versions of RACOS. POSS ~\cite{NIPS2015_b4d168b4} is another derivative-free optimization approach that employs evolutionary Pareto optimization to find a small-sized subset with good performance. POSS treats the subset selection task as a bi-objective optimization problem that simultaneously optimizes some given criterion and the subset size. POSS has been proven with the best so far approximation quality on these problems. PPOSS ~\cite{qian2016parallel} is the parallel version of the POSS algorithm.
Yu released a toolbox ZOOpt ~\cite{Liu_2022} with effective derivative-free optimization algorithms.

\section{Problem Setting and Formulation}
\subsection{The Machine Learning Module}
In ABL, we are given a set of $L$ training labeled data $D=\{(X_1,Y_1),(X_2,Y_2),\dots,(X_L,Y_L)\}$, where $X_i=[x_{i1},x_{i2},\dots,x_{il_i}],x_{ij}\in \mathcal{R}^d$ represents a sample sequence of length $l_i$, with $d$ denoting the feature dimension. Each sample corresponds to a symbol sequence of length $l_i$, denoted as $S_i=[s_{i1},s_{i2},\dots,s_{il_i}],s_{ij}\in SYM$, where $SYM$ represents the set of all symbols. However, the true symbol sequence is unknown and needs to be predicted jointly through machine learning and logical reasoning. The label $Y_i\in \{False, True\}$ serves as a Boolean variable indicating whether the symbol sequence $S_i$ corresponding to the sample sequence $X_i$ complies with the logic.

During ABL, we need to train a perception model, denoted as $g$, for symbol induction, which classifies samples into corresponding symbols. Typically, the perception model outputs the probabilities of a sample belonging to different symbols. This yields a probability sequence $P_i=[p_{i1},p_{i2},\dots,p_{il_i}]$, where $p_{ij}=g(x_{ij})$, and $p_{ij}\in\mathcal{R}^{|SYM|}$, with $|SYM|$ representing the number of all symbols. Based on the probability sequence, we obtain the prediction sequence $O_i=[o_{i1},o_{i2},\dots,o_{il_i}]$ for the sample sequence using a machine learning classifier based on the perception model. Here, $o_{ij}=f(x_{ij})=\arg\max_k p_{ij}[k]$, where $f$ represents the machine learning classifier obtained from the perception model $g$.
\subsection{The Logical Reasoning Module}
The logical reasoning module contains a knowledge base (KB) used to store logical rules formed from human knowledge. Its function is to deduce the values of variables based on inputs of both variables and constants, using the constants and rules stored in the KB. To integrate the machine learning module with the logical reasoning module, we need to determine whether the prediction sequence $O_i$ is compatible with the knowledge base KB. If the sequence is compatible, the prediction sequence does not need modification. However, in cases where it is not compatible, parts of the observation results in the prediction sequence $O_i$, which are all constants, need to be modified to variables. Thus, we denote a Boolean sequence $B_i=[b_{i1},b_{i2},\dots,b_{il_i}], b_{ij}\in\{False,True\}$, where if $b_{ij}=False$, it means keeping $o_{ij}$ as a constant, and if $b_{ij}=True$, it means replacing $o_{ij}$ with a variable, which is symbolized as `$\_$' in Prolog. For example, for a prediction sequence $O_i=[o_{i1},o_{i2},o_{i3},o_{i4},o_{i5}]$ of length 5, if we have the sequence $B_i=[False,True,False,True,False]$, then the observation sequence is replaced with ${O’}_i=[o_{i1},\_,o_{i3},\_,o_{i5}]$. By inputting this into the knowledge base KB, we obtain a new observation sequence $\Delta(O_i)=[o_{i1},\delta(o_{i2}),o_{i3},\delta(o_{i4}),o_{i5}]$, where $\delta(o_{i2})$ and $\delta(o_{i4})$ are the correction results obtained by combining $o_{i1}$, $o_{i3}$, and $o_{i5}$ with the KB through logical reasoning. Providing the corrected result $\Delta(O_i)$ to the machine learning model $f$ allows for further updating of the model, gradually integrating the knowledge from the knowledge base into the machine learning model. Let $\models$ denote logical entailment. $\psi$ represents the update to model $f$. The above process can be symbolized as:
\begin{align*}
    &(X_i, f) \triangleright O_i\\
    s.t. &(KB, O_i) \models Y_i, or (KB,\Delta(O_i))\models Y_i, f\leftarrow \psi(f,\Delta(O_i))
\end{align*}
\subsection{The Optimization Problem}
In the aforementioned ABL process, the procedures of machine learning and logical reasoning are predetermined, and the only uncertainty lies in the bridge between the two, which is the Boolean sequence $B$ used to determine whether symbols should be treated as constants or variables. The essence of selecting $B$ is to identify the current prediction results that are more likely to be erroneous and need correction. This selection process is crucial in the ABL process because if correctly predicted variables are designated as constants, logical reasoning may fail due to information loss. Conversely, if incorrectly predicted variables are designated as constants, even successful logical reasoning may lead to erroneous abduction. Therefore, ABL algorithms need to optimize the selection of the Boolean sequence $B$, with the optimization goal being the number of samples where the prediction results match the knowledge base, i.e.:
\begin{align*}
&\max_B\max_{D_c\subset D} |D_c|\\
s.t.&\forall (X_i,Y_i)\in D_c, (X_i, f) \triangleright O_i,\\&
(KB,B_i,O_i) \triangleright \Delta(O_i), (KB,\Delta(O_i))\models Y_i\\
\end{align*}
For such an optimization problem, the variable to be optimized is the Boolean sequence $B$, which constitutes a discrete optimization problem. Conventional numerical optimization algorithms cannot be applied to this problem; only gradient-free ones are suitable. Additionally, we observe that the search space for the variables in this problem grows exponentially with the sequence length. It is challenging to obtain the global optimal solution by traversing every possible combination.
\subsection{Time Consumption}
From the perspective of the entire ABL process, assuming the execution times of the machine learning module and the logical reasoning module are fixed, denoted as $T_{ml}$ and $T_{lr}$ respectively. 
We denote $T_{ac}$ as the number of times the knowledge base needs to be accessed for logical reasoning after each machine learning iteration. It deduces the total time $T_{total}$ consumed by one ABL process:
$T_{total}=T_{ml}+T_{ac}\cdot T_{lr}$. In normal situations, $T_{lr} >> T_{ml}$.
This makes the number of trial iterations, $T_{ac}$, the key factor in determining the total time consumption of ABL. Traditional optimization algorithms often set $T_{ac}$ to hundreds or more. However, such overhead is impractical in applications. Therefore, we are committed to addressing this challenge by completing ABL with a smaller $T_{ac}$.

\section{Methodology}
We propose an optimization algorithm for ABL that can respond quickly with very few trial-and-error attempts. Our algorithm mainly consists of two steps: In the first step, we achieved a smooth transition from sample perception to symbol perception, addressing the shortcomings of previous algorithms from three aspects. In the second step, we realized a smooth transition from symbol perception to symbol reasoning, proposing an efficient algorithm for converting the continuous probability sequence into the top $T_{ac}$ discrete Boolean sequences. 
\subsection{From Sample Perception to Symbolic Perception}
Previous optimization algorithms fail to perform well when the number of trial-and-error attempts is strictly limited. It can be attributed to their inability to effectively narrow down the search space. This limitation is primarily manifested in the following three aspects:
\begin{itemize}
    \item \textbf{Underutilization of Prediction:} Previous optimization algorithms often start at a random state and fail to leverage the information provided by the prediction.
    \item \textbf{Underutilization of Symbol Relationship:} They tend to overlook the relational information inherent in the structure of symbol sequences.
    \item \textbf{Underutilization to Accumulate Experience:} They cannot accumulate experience from successful abductive processes. Instead, they restart from scratch with zero experience each time they encounter new data.
\end{itemize}

Addressing these limitations is crucial for developing optimization algorithms that can perform well under finite conditions and efficiently handle symbol perception tasks.
\subsubsection{Utilization of Perceptual Information}
Machine learning models typically first produce probabilistic predictions, and then select the symbol with the highest probability as the definitive result. Previous ABL algorithms directly searched for suitable Boolean variables $B$ based on the absolute results, which resulted in a loss of a significant amount of perceptual information. In cases where samples are insufficient, the perceptual results of machine learning algorithms themselves are ambiguous and uncertain. Preserving and utilizing their probability information is advantageous for determining which symbol predictions are unreliable, and identifying symbols that may have been predicted incorrectly. Therefore, we use the probabilistic symbols obtained from the machine learning model as input to the symbol perception module. For a sample sequence $X_i=[x_{i1},x_{i2},\dots,x_{il_i}],x_{ij}\in \mathcal{R}^d$ and a machine model $g$, it's easy to obtain the probabilistic symbols $P_i=[p_{i1},p_{i2},\dots,p_{il_i}]$, where $p_{ij}=g(x_{ij}), p_{ij}\in\mathcal{R}^{|SYM|}$.
\subsubsection{Utilization of Symbol Structure}
\begin{figure*}[htbp]
\begin{center}
\centerline{\includegraphics[scale=0.47]{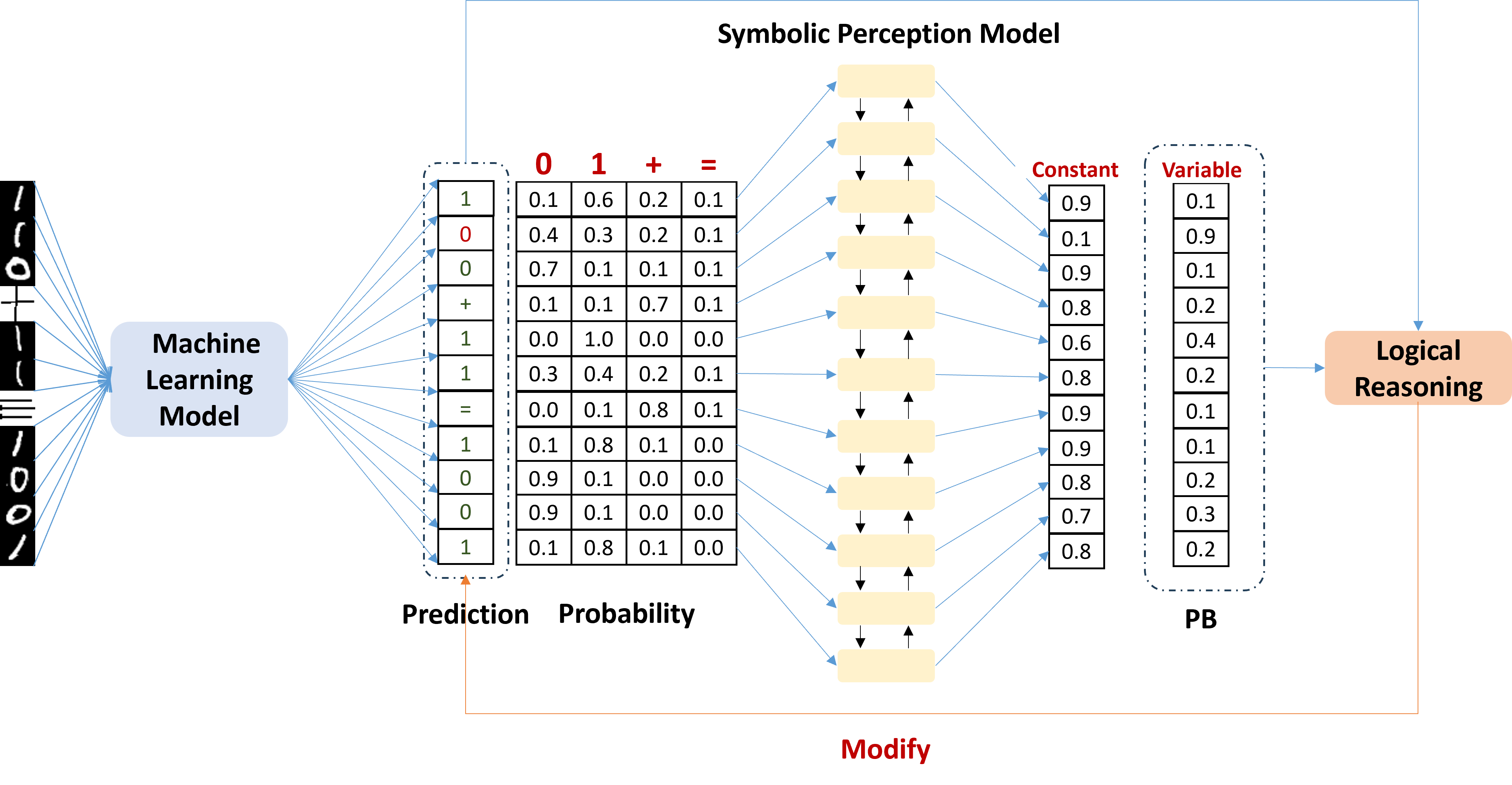}}
\caption{The schematic diagram of ABL transitioning from sample to symbolic perception.}
\label{Perception}
\end{center}
\end{figure*}
In the process of understanding symbols, the meaning of an individual symbol is typically limited and requires connections with other symbols to better understand its significance. Moreover, logical relationships are formed among multiple symbols, which are then induced into knowledge. Therefore, symbol-level perception is based on symbol sequences. At the sample perception level, we have obtained probabilistic symbol sequences through machine learning models. The subsequent requirement is to combine the probabilistic predictions of individual symbols from machine learning with the relationships between symbols within the symbol sequence to predict $B$, determining whether each symbol should be treated as a variable or a constant in the logical reasoning process as shown in \cref{Perception}. In current machine learning technologies, neural networks designed for sequence processing, such as bidirectional recurrent neural networks ~\cite{schuster1997bidirectional} and Transformer ~\cite{vaswani2017attention} architectures, are well-suited for this task. They can output the probability of each symbol being predicted correctly or incorrectly $PB_i=[pb_{i1},pb_{i2},\dots,pb_{il_i}]$, which can serve as the basis for obtaining $B$. For a bidirectional sequence neural network $BSNN$:
\begin{align*}[pb_{i1},pb_{i2},\dots,pb_{il_i}]=BSNN([p_{i1},p_{i2},\dots,p_{il_i}])
\end{align*}
\subsubsection{Accumulation of Experience}
Furthermore, neural networks designed for sequence processing can accumulate experience from each inference process, without wasting successful inference cases. After the inference process, we can use the Boolean variables that achieved the highest consistency between the knowledge base and the data during the evaluation as supervisory information. This information can be used to update the BSNN. Consequently, the successful experiences of inference can be effectively accumulated, enabling the neural network to possess symbol-based intuition akin to human scientists. This intuition plays a crucial role in significantly reducing unnecessary search iterations. 

More importantly, the sequence neural network can be pre-trained. This means that by considering only partially replacing symbols in originally logically correct sequences and allowing the sequence neural network to predict the positions to be replaced, we can accumulate sufficient experience very simply before the start of ABL. This process enables the model to converge faster and perform better.
\subsection{From Symbolic Perception to Symbolic Reasoning}
\subsubsection{From Continuous to Discrete}
Taking symbol perception into further consideration, we aim to obtain $T_{ac}$ Boolean variables $B$ for querying the knowledge base based on the results of symbol perception $PB$, the probability of each symbol being predicted correctly or incorrectly. Our objective is to use a probability sequence to obtain the $T_{ac}$ most likely discrete Boolean variables, and the probabilities of the sequence follow the multiplication rule. For example, given the probability sequence $PB_i=[0.1, 0.2, 0.6]$, where the probability of the first symbol being predicted incorrectly is 0.1 (i.e., $b_{i1}$ is True with a probability of $0.1$ and False with a probability of 0.9), and so forth. The probability of $B_i$ taking the value [False, False, True] is the highest, at $0.9 * 0.8 * 0.6 = 0.432$. The probability of $B_i$ taking the value $[False, False, False]$ is the second highest, at $0.9 * 0.8 * 0.4 = 0.288$, and the probability of $B_i$ taking the value [False, True, False] is the third highest, at $0.9 * 0.2 * 0.6 = 0.108$. We want to select the $T_{ac}$ most probable Boolean sequences for querying the knowledge base. However, using the simplest approach, if the sequence length is $l$, computing all possible solutions' probabilities and sorting them to select the top $T_{ac}$ would have a complexity exceeding exponential time, $O(2^l \log2^l)$. We have designed an extremely rapid method that allows us to obtain the $T_{ac}$ most likely sequences based on the probability sequence in a very short time, requiring only $O(l \log l + T_{ac} \log T_{ac} +lT_{ac})$ computational complexity to achieve this goal.
\subsubsection{Initialize}
Our approach is as follows: Initially, the sequence with the highest probability can be directly obtained. We simply assign True to elements in the probability sequence greater than $0.5$, and False otherwise. Then, based on this initial sequence, we generate subsequent sequences. To simplify the description, for each position in a sequence, if it remains consistent with the initial state, we call it the original state; if it differs, we call it the flipped state. The transition from the initial state to the flipped state is referred to as flipping, while the transition from the flipped state back to the initial state is referred to as unflipping. Furthermore, we have proven a theorem:
\begin{theorem}
\label{theorem1}
Let \( a > 1 \). After identifying the probabilities of the top \( a-1 \) sequences, it holds that at least one sequence among the top \( a-1 \) solutions does not require any positions to transition from flipped to initial states in order to obtain the \( a \)-th top sequence. Additionally, only one position needs to transition from the initial to the flipped state.
\end{theorem}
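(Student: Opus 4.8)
The plan is to recast the enumeration of Boolean sequences by probability as an enumeration of subsets by weight, after which the combinatorial claim collapses to a one‑line monotonicity observation. First I would fix a sample and write $pb_1,\dots,pb_l$ for its symbol‑perception probabilities, and note that a Boolean sequence $B$ is determined entirely by the set $F \subseteq \{1,\dots,l\}$ of positions that are in the \emph{flipped} state relative to the initial sequence (the one assigning $\mathit{True}$ exactly where $pb_j > 0.5$). Setting $m_j = \max(pb_j, 1-pb_j)$ and $r_j = \min(pb_j, 1-pb_j)/m_j \in (0,1]$, the multiplication rule for sequence probabilities gives, for the sequence with flipped set $F$, the value $p(F) = \big(\prod_j m_j\big)\prod_{j\in F} r_j$. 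Consequently, ordering Boolean sequences by decreasing probability is the same as ordering the subsets $F$ by increasing cost $\sum_{j\in F} c_j$ with $c_j = -\log r_j \ge 0$, so the ``top $a$'' sequences are exactly the $a$ lowest‑cost subsets; since every $c_j \ge 0$, the empty set is a cost‑minimizer — the unique one unless some $pb_j = 0.5$ — so $a > 1$ forces the $a$‑th subset $F_a$ to be nonempty.

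The core of the argument is then immediate: given $F_a$ with $a > 1$, choose any $k \in F_a$ and set $F' = F_a \setminus \{k\}$. Since $r_k \le 1$ — and $r_k < 1$ whenever $pb_k \ne 0.5$ — deleting $k$ strictly lowers the cost, i.e.\ $p(F') > p(F_a)$, so $F'$ ranks strictly before $F_a$ and therefore lies among the top $a-1$ sequences. As $F_a = F' \cup \{k\}$ with $k \notin F'$, transforming $F'$ into $F_a$ moves no position from flipped back to initial and moves exactly the single position $k$ from initial to flipped, which is precisely the assertion of the theorem. (Since $k$ was arbitrary, several of the top $a-1$ sequences may in fact work.)

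The one point I would treat with a careful sentence rather than real effort is ties. If distinct sequences share a probability, ``the top $a-1$'' presupposes a tie‑breaking rule, but this is harmless here because the inequality $p(F') > p(F_a)$ is \emph{strict}, so $F'$ sits strictly above $F_a$ in any refinement of the probability order. The only genuinely degenerate situation is $pb_j = 0.5$, where $r_j = 1$ and flipping position $j$ costs nothing; I would dispose of it either by the standing assumption that perception probabilities are never exactly one half, or by breaking symmetry with a fixed positional priority, after which the monotonicity step goes through verbatim. I expect no real obstacle: essentially all the content sits in the reformulation of the first paragraph, and the combinatorial heart is just the fact that removing an element from a set of nonnegative weights cannot increase its weight.
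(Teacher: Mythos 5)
Your proof is correct and rests on the same key fact as the paper's --- that removing a single position from the flipped set multiplies the sequence probability by $1/r_k \ge 1$, so the $a$-th sequence's immediate predecessor ranks strictly higher and hence lies among the top $a-1$; you simply construct this predecessor directly as $F_a \setminus \{k\}$, whereas the paper reaches the same conclusion by a two-part contradiction argument on the minimal number of flips. Your explicit treatment of the $pb_j = 0.5$ tie case is a degeneracy the paper silently glosses over, but otherwise the two arguments are essentially identical.
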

\begin{proof}
\textbf{Part 1:}  
We first prove that among the top \( a-1 \) sequences, there exists at least one sequence that does not require any positions to transition from flipped to initial states in order to obtain the \( a \)-th top sequence. This is evident because the top sequence (i.e., the first sequence) is included among the top \( a \) sequences, and it does not require any transitions from flipped to initial states. Hence, the proof for this part is straightforward.

\textbf{Part 2:}  
Next, we prove that among the top \( a-1 \) sequences, there exists at least one sequence that requires only one position to transition from the initial to the flipped state to obtain the \( a \)-th top sequence.
Let \( a > b > 0 \), and suppose that the \( b \)-th sequence among the top \( a-1 \) sequences does not require any unflipping and requires the minimum number of flips, denoted by \( c \), where \( c > 1 \), to reach the \( a \)-th top sequence. Consider the case where one of the \( c \) flip positions is changed.
\begin{itemize}
  \item If the new sequence obtained is not among the top \( a-1 \) sequences, its probability must be greater than that of the \( a \)-th top sequence. This would imply that the current \( a \)-th top sequence is not truly the \( a \)-th top sequence, which contradicts our assumption.
  \item If the new sequence is among the top \( a-1 \), then the number of flips required to transition from this new sequence to the \( a \)-th top sequence is \( c-1 \). This contradicts the assumption that the \( b \)-th sequence requires the minimum number of flips, \( c \), without unflipping. 
\end{itemize}
Thus, by contradiction, we conclude that among the top \( a-1 \) sequences, there exists at least one sequence that requires only one position to transition from the initial to the flipped state in order to obtain the \( a \)-th top sequence.
\end{proof}
\subsubsection{Search Based on Max Heap}
According to the theorem above, we have already proven that the $a-th$ solutions can be obtained from one of the top $a-1$ solutions through a single flip, and in this process, all unflips are unnecessary. Hence, each flip only needs to choose the minimal-cost option that minimizes the total probability loss. For a position $u$, flipping reduces the total probability by a factor of:
$V_u = \frac{min(pb_u, 1 - pb_u)}{max(pb_u, 1 - pb_u)}$, where $V$ is fixed for each position. Therefore, we can pre-sort each position in descending order based on $V$ and simply choose the unflipped position with the highest rank during each flip. The sorting time complexity is $O(l \log l)$, as shown in \cref{Sort}. 
\begin{figure}[htbp]
\vskip 0.2in
\begin{center}
\centerline{\includegraphics[scale=0.7]{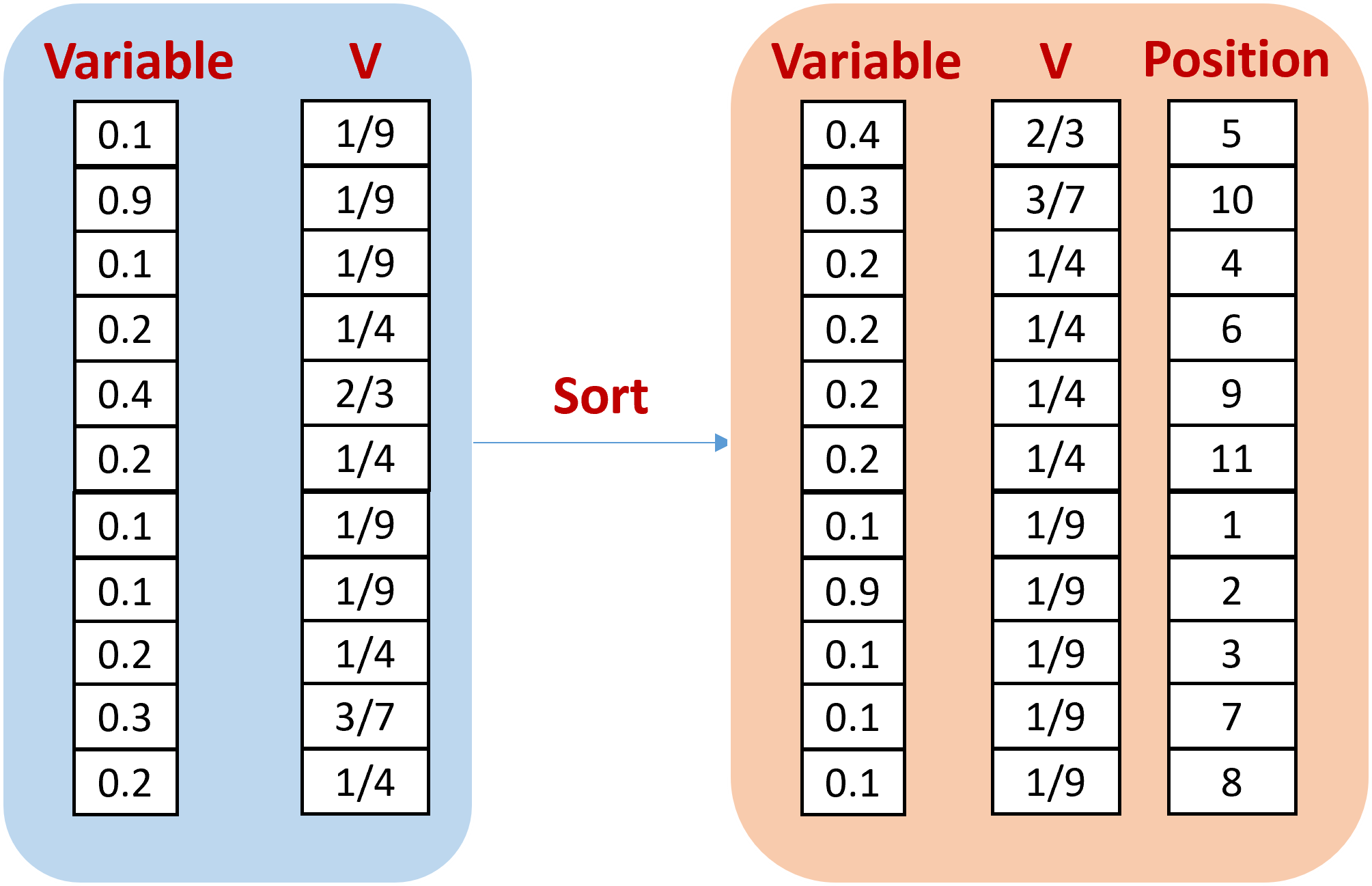}}
\caption{Ranking positions based on overall probability loss.}
\label{Sort}
\end{center}
\vskip -0.2in
\end{figure}
After obtaining a new sequence, we can easily determine its successor sequence obtained by flipping the least costly position and its total probability. We only need to maintain a max-heap as shown in \cref{Bridge} to store the top $a$ sequences found so far based on the probability of their next successor. This allows us to directly query the heap top to find the sequence with the highest successor probability with time complexity of $O(1)$. This successor has the highest probability among all sequences not selected. After identifying the top sequence in the heap, we need to locate the successor sequence of its current successor sequence. Subsequently, we store the current successor sequence into the heap. Upon finding the next successor sequence of the top sequence, we update its position in the heap. In the absence of conflicts, the total time complexity for $T_{ac}$ solutions is $O(2T_{ac}\log T_{ac})$,  as the time complexity for adjusting the position of a node in the heap once is $\log T_{ac}$.
\begin{figure*}[htbp]
\vskip 0.2in
\begin{center}
\centerline{\includegraphics[scale=0.32]{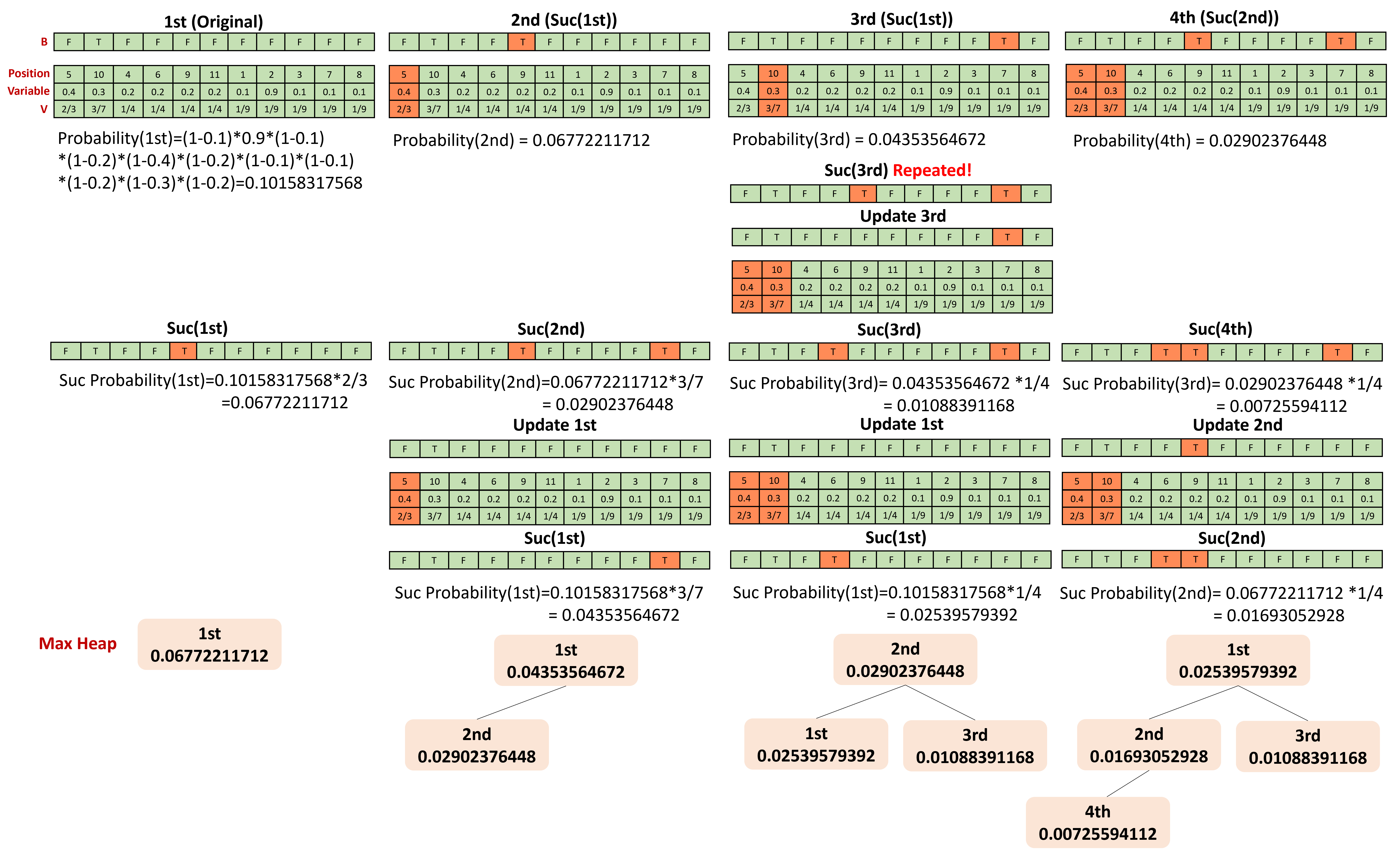}}
\caption{The schematic diagram of the search algorithm for finding the $T_{ac}$ discrete Boolean sequences $B$ with the highest probability based on continuous probability sequence $PB$.}
\label{Bridge}
\end{center}
\vskip -0.2in
\end{figure*}
\subsubsection{Handling Conflicts}
In some cases, the current successor sequence may duplicate the previously obtained successor sequence, such as [False, False, True] and [False, True, False], both resulting in [False, True, True]. When we encounter conflicts, the current sequence needs to skip its next successor sequence and instead find the successor sequence with a lower rank to avoid the conflict. In other words, rather than selecting the position with the minimum loss among the positions that can be flipped, we choose the second smallest position. 

Since each search for a new solution requires finding the next successor of the current top sequence as well as the successor of its current successor, at most $2l$ conflicts occur. The time required for conflict detection is $O(2lT_{ac})$. So the total time complexity is $O(l \log l + 2T_{ac} \log T_{ac} + 2lT_{ac})$, which can be settled as $O(l \log l + T_{ac} \log T_{ac} +lT_{ac})$.
\subsubsection{Feedback}
Through the aforementioned method, we efficiently obtain the most probable $T_{ac}$ sets of Boolean sequences. We then conduct knowledge base queries based on each of them to find the sequence with the highest consistency with the knowledge base, which serves as the final result of the inference. The logic-revised results are used to provide feedback to the machine learning perception model, and the final Boolean sequence is used as feedback for the BSNN. This approach allows for the accumulation of successful experiences from each inference process, enhancing both sample-level and symbol-level perception. Consequently, it aligns the model's perception results more closely with the knowledge base, subtly integrating knowledge into the machine learning model.
\section{Experiments} 
\subsection{About the Evaluations}
We evaluated the proposed algorithm through diverse approaches. Initially, in preliminary experiments, we used the overall accuracy of ABL on the target task as the performance metric and the number of accesses to the knowledge base $T_{ac}$ as the time metric. We demonstrated the efficiency of algorithms in two forms: comparing target performance with fixed time consumption and comparing time consumption under fixed target performance. However, we later found that this evaluation method is not sufficiently scientific. This is because in practical applications, knowledge bases are typically incomplete. The inputted constant symbols to the knowledge base may neither be provable nor disprovable. As a result, the performance of ABL is significantly affected by the accidental factor of whether the rules derived from the knowledge base are correct. This inner problem cannot be resolved by the outer optimization algorithm and can lead to learning failures even if the optimization module is optimized to the extreme. Moreover, in cases of poor knowledge base quality, worse performance on the target task can actually prove the superiority of the optimization algorithm. 

To address the fairness issue mentioned above, we propose two methods to independently and objectively evaluate the performance of optimization algorithms of ABL, excluding external accidental factors. The first method involves evaluating under the condition of having a complete knowledge base. However, this comparison method is utopian in practical applications. The second method, and the one we advocate the most, is to return to the original goal of the optimization algorithm: to integrate knowledge from the knowledge base into the machine learning model. Hence, what we should primarily evaluate is the degree of consistency between the knowledge base and the machine learning model under the influence of the optimization algorithm.
\subsection{Experimental Setup}
\label{Experimental Setup}
We conducted experiments on three datasets ~\cite{dai2019bridging}, Digital Binary Additive (DBA), Random Symbol Binary Additive (RBA) and Handwritten Math Symbols (HMS). The first two datasets are from ~\cite{dai2019bridging}, while HMS comes from ~\cite{licloses}, and is constructed in the same way as DBA and RBA in our experiments. Unlike previous works on ABL that typically impose restrictions on accessing the knowledge base to hundreds of times or more, we strictly limited the number of accesses to the knowledge base to ${T_{ac}}=5$. We set the length of individual sample sequences to be between 5 and 10. Additionally, we specified that every 3 sample sequences form a group for ABL. At the end of the ABL phase, similarly, we combine 3 sequences into one group and incorporate knowledge-based learning to derive a set of rules. Then, based on the satisfaction of each rule set by the sample sequences and the labels $Y$ indicating whether it conforms to logic, we utilize a Multi-Layer Perceptron (MLP) model for supervised training. This MLP model is eventually employed for predicting the test data combined with rule sets. We utilized LeNet5 ~\cite{lecun1998gradient} as the machine learning model for the perceptual part of entities and a bidirectional Long Short Term Memory Networks (LSTM) ~\cite{hochreiter1997long} model with a hidden layer dimension of 10 as the sequence neural network for the symbolic perception part. The length of the LSTM sequences is set to 10. For sequences shorter than 10, we pad them with leading zeros. At any stage, the number of epochs for all neural networks is set to 10. We set the total number of iterations for conducting ABL to be 150.

Before ABL begins, we employed self-supervised learning approaches to pretrain the neural networks. For LeNet5, it needs to be converted into an autoencoder before pretraining, where the intermediate layer dimension is set to $|SYM|$, representing the number of symbols. Since after pretraining, we do not know the mapping between each dimension of the intermediate layer and specific symbols, we choose to retrieve the knowledge base and select the mapping with the highest match. For LSTM, the pretraining method involves taking originally logically correct sequences and randomly replacing less than half of the samples. The LSTM then determines whether each position has been replaced or not. We constructed the knowledge base using the SWI-Prolog ~\cite{wielemaker2012swi} tool and implemented the deep learning code using the TensorFlow-based Keras framework ~\cite{abadi2016tensorflow}. All experiments were conducted on 4 A800 GPUs.
\subsection{Comparison Methods}
\label{Comparison Methods}
We compared our proposed optimization algorithm with five gradient-free optimization algorithms, namely RACOS, SRACOS, SSRACOS, POSS, PONSS and ABL-REFL which is an ABL paradigm that directly predicts the Boolean list, in the same framework. They are:
\begin{itemize}
\item \textbf{RACOS}: RACOS is a proposed classification-based derivative-free optimization algorithm. Unlike other derivative-free optimization algorithms, the sampling region of RACOS is learned by a simple classifier.
\item \textbf{SRACOS}: SRACOS is the sequential version of RACOS.
\item \textbf{SSRACOS}: SSRACOS is a noise handling variant of SRACOS. 
\item \textbf{POSS}: POSS is another derivative-free optimization approach that employs evolutionary Pareto optimization to find a small-sized subset with good performance.
\item \textbf{PONSS}: PONSS is a noise handling variant of POSS. 
\item \textbf{ABL-REFL}: ABL-REFL is a paradigm that directly utilizes neural networks to predict Boolean lists. 
\end{itemize}
\subsection{Experiments with Incomplete Knowledge Base}
Based on the analysis, the time consumed by ABL is primarily determined by the number of accesses $T_{ac}$ to the knowledge base. Therefore, there are two ways to evaluate algorithm efficiency: one is to compare performance with a fixed number of accesses, and the other is to compare the required number of accesses for fixed performance. 

For the former form, We primarily evaluate the final performance $ACC_{IT_{total}}$ after iteration completion, the best performance $ACC_{best}$ during iteration, and the final convergence rate $CR$, where $IT_{total}=150$ is the number of iterations. For the last form, we compare the number of accesses ${T_{ac}}_a$ needed for the algorithm to achieve and never fall below a fixed accuracy $a$ and the number of accesses ${T_{ac}}_c$ needed to achieve a fixed convergence rate $c$. The fixed accuracy $a$ is taken as $65\%$, while the fixed convergence rate $c$ is set at $3\%$ because most discrete optimization algorithms struggle to converge. We use `$-$' to indicate algorithms that ultimately fail to reach the objective, with the required number of accesses remaining unknown. The evaluation precision of all metrics will be influenced by the evaluation interval $\Delta_{IT}=10$ and the number of accesses $T_{ac}=5$ per iteration. In particular, the convergence rate $CR$ is defined as:
\begin{align*}
CR=\frac{|ACC_{IT_{total}}-ACC_{IT_{total}-\Delta_{IT}}|}{\Delta_{IT}}. 
\end{align*}
The variation of accuracy with the number of iterations on the three datasets can be seen in \cref{DBA_Incompleted,RBA_Incompleted,HMS_Incompleted} and the evaluation results are shown in \cref{DBA_T,RBA_T,HMS_T}.

\begin{table}[htbp]
    \centering
    \captionof{table}{The evaluation results on the DBA dataset.}
    \label{DBA_T}
    \begin{tabular}{c c c c c c}
    \hline\hline
    Method &$ACC_{IT_{total}}$&$ACC_{best}$&$CR$&${T_{ac}}_{a}$&${T_{ac}}_{c}$\\
        \hline\hline
        RACOS&61.22&73.56&0.29&$>750$&700\\
        SRACOS&77.33&77.33&2.46&750&$>750$\\
        SSRACOS&88.33&72.67&1.57&300&$>750$\\
        POSS&73.00&76.56&0.27&700&700\\
        PONSS&65.56&74.78&0.58&150&$>750$\\
        ABL-REFL&71.50&74.33&0.35&100&$>750$\\
        ABL-PSP&\textbf{96.67}&\textbf{96.67}&\textbf{0.00}&\textbf{50}&\textbf{550}\\
        \hline\hline
    \end{tabular}
\end{table}

\begin{figure}[htbp]
    \centering
    \centering
    \includegraphics[scale=1.0]{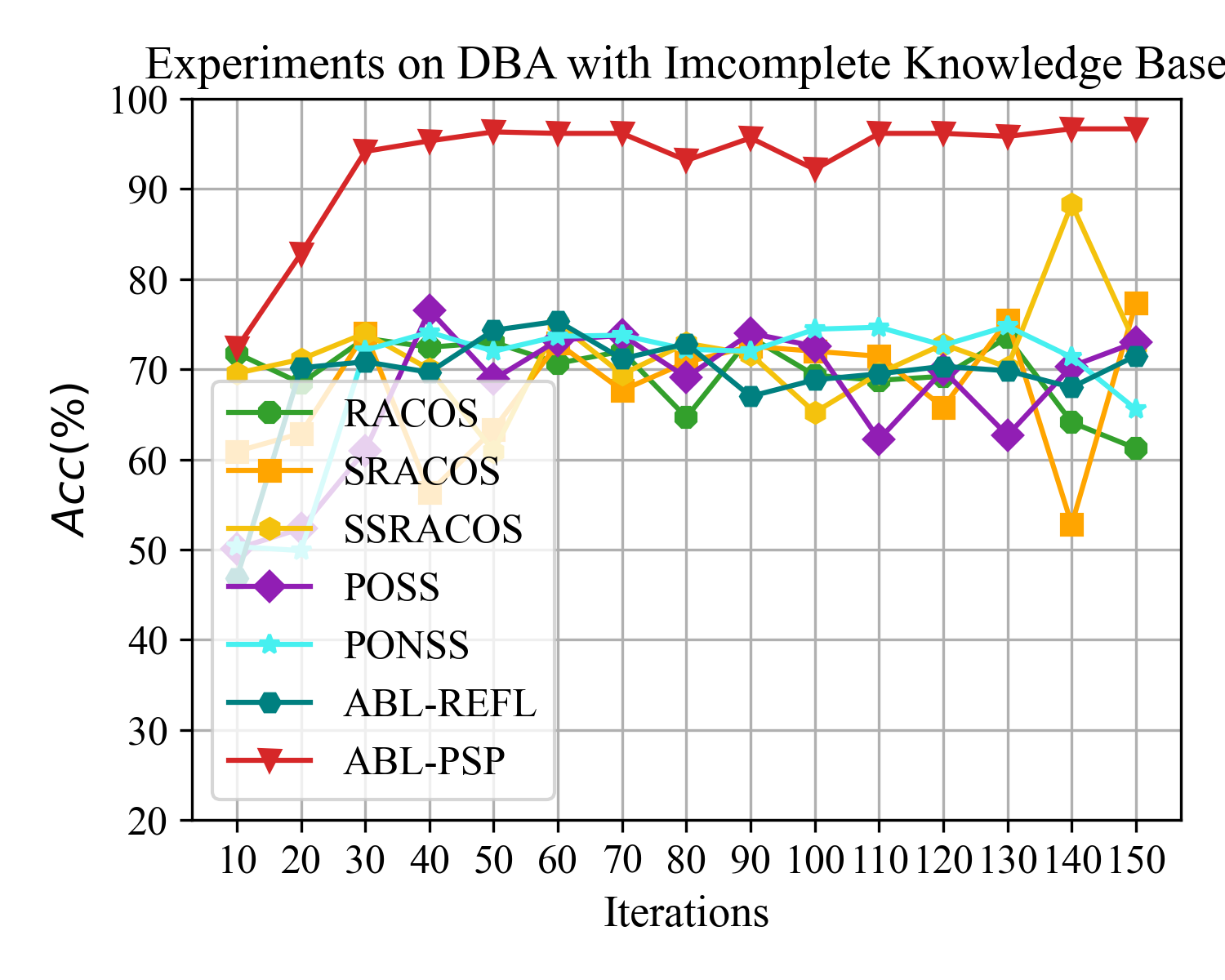} 
    \caption{The variation of accuracy on the DBA dataset with the incomplete knowledge base.}
    \label{DBA_Incompleted}
\end{figure}




\begin{table}[htbp]
\centering
\caption{The evaluation results on the RBA dataset.}
\label{RBA_T}
\vskip 0.15in
\begin{tabular}{c c c c c c}
\hline\hline
Method &$ACC_{IT_{total}}$&$ACC_{best}$&$CR$&${T_{ac}}_{a}$&${T_{ac}}_{c}$\\
    \hline\hline
    RACOS&62.17&67.17&0.53&$>750$&$>750$\\
    SRACOS&65.33&66.83&1.27&750&$>750$\\
    SSRACOS&49.11&68.33&1.09&$>750$&$>750$\\
    POSS&62.67&67.67&0.23&$>750$&600\\
    PONSS&68.67&69.33&0.38&750&$>750$\\
    ABL-REFL&63.67&69.50&0.40&$>750$&$>750$\\
    ABL-PSP&\textbf{71.50}&\textbf{71.50}&\textbf{0.05}&\textbf{300}&\textbf{500}\\
    \hline\hline
\end{tabular}
\end{table}
\begin{figure}[htbp]
\vskip 0.2in
\begin{center}
\centerline{\includegraphics[scale=1.0]{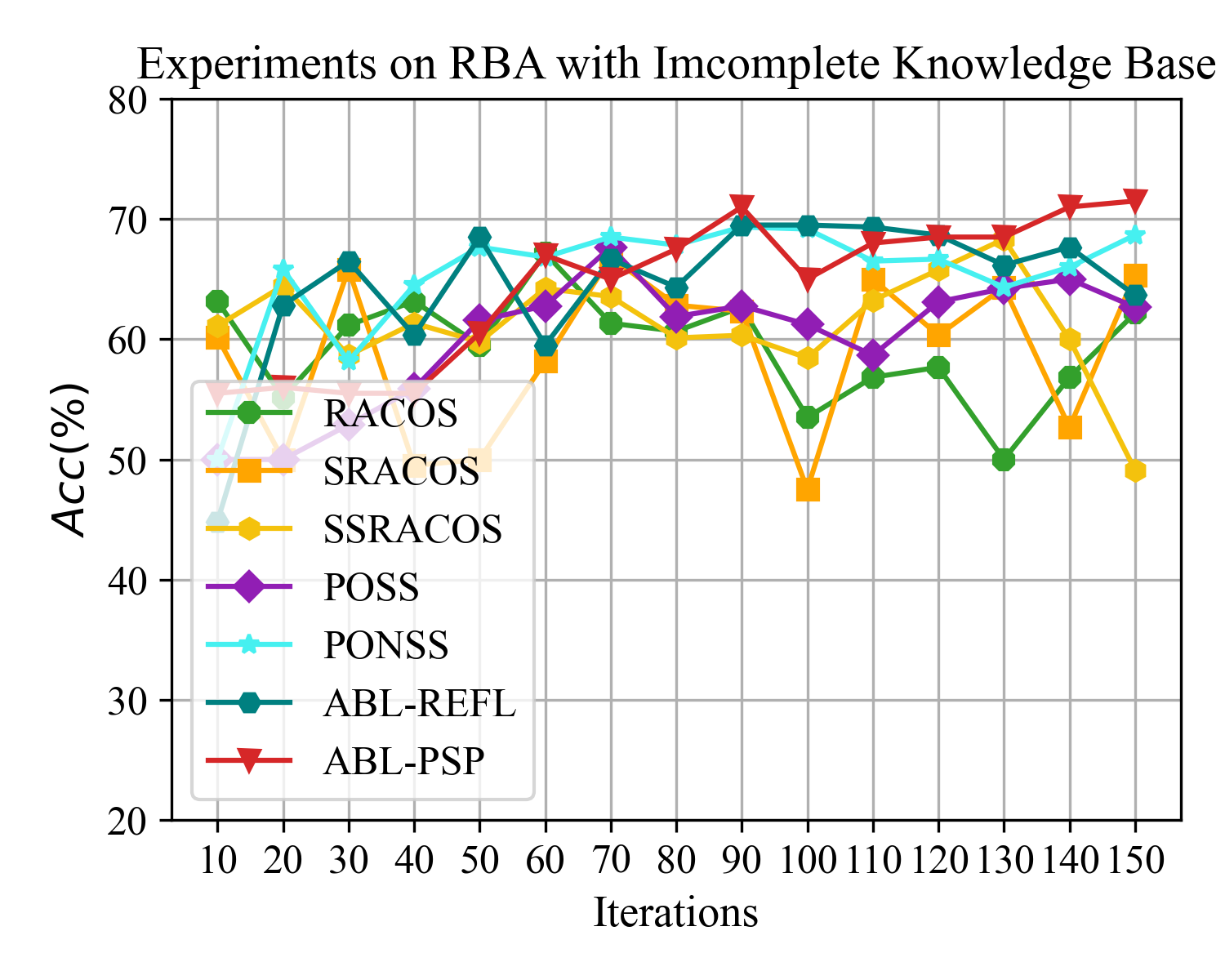}}
\caption{The variation of accuracy on the RBA dataset with the incomplete knowledge base.}
\label{RBA_Incompleted}
\end{center}
\vskip -0.2in
\end{figure}

\begin{table}[htbp]
\centering
\caption{The evaluation results on the HMS dataset.}
\label{HMS_T}
\vskip 0.15in
\begin{tabular}{c c c c c c}
\hline\hline
Method &$ACC_{IT_{total}}$&$ACC_{best}$&$CR$&${T_{ac}}_{a}$&${T_{ac}}_{c}$\\
    \hline\hline
    RACOS&61.50&68.67&1.02&$>750$&$>750$\\
    SRACOS&50.00&71.00&\textbf{0.05}&$>750$&650\\
    SSRACOS&69.00&70.33&1.90&750&$>750$\\
    POSS&71.00&71.00&0.15&700&700\\
    PONSS&50.00&\textbf{72.33}&0.08&$>750$&600\\
    ABL-REFL&71.33&\textbf{72.33}&\textbf{0.05}&700&700\\
    ABL-PSP&\textbf{72.00}&72.00&\textbf{0.05}&\textbf{200}&\textbf{200}\\
    \hline\hline
\end{tabular}
\end{table}
The experimental results indicate that ABL-PSP enhances the efficiency of ABL, demonstrating better performance compared to other optimization algorithms when the number of trial and error attempts is restricted. Based on the results, it is not difficult to find that, when the number of accesses is fixed, most optimization algorithms tend to perform relatively poorly. However, since ABL-PSP maximizes the utilization of all available information and past experience to reduce the uncertainty of solutions, it achieves decent results with very few attempts. When the learning objective is fixed, ABL-PSP stands out as one of the rare algorithms that achieve the target without the need for extensive trials, and the time saved by ABL-PSP is incalculable. 
\begin{figure}[htbp]
\vskip 0.2in
\begin{center}
\centerline{\includegraphics[scale=1.0]{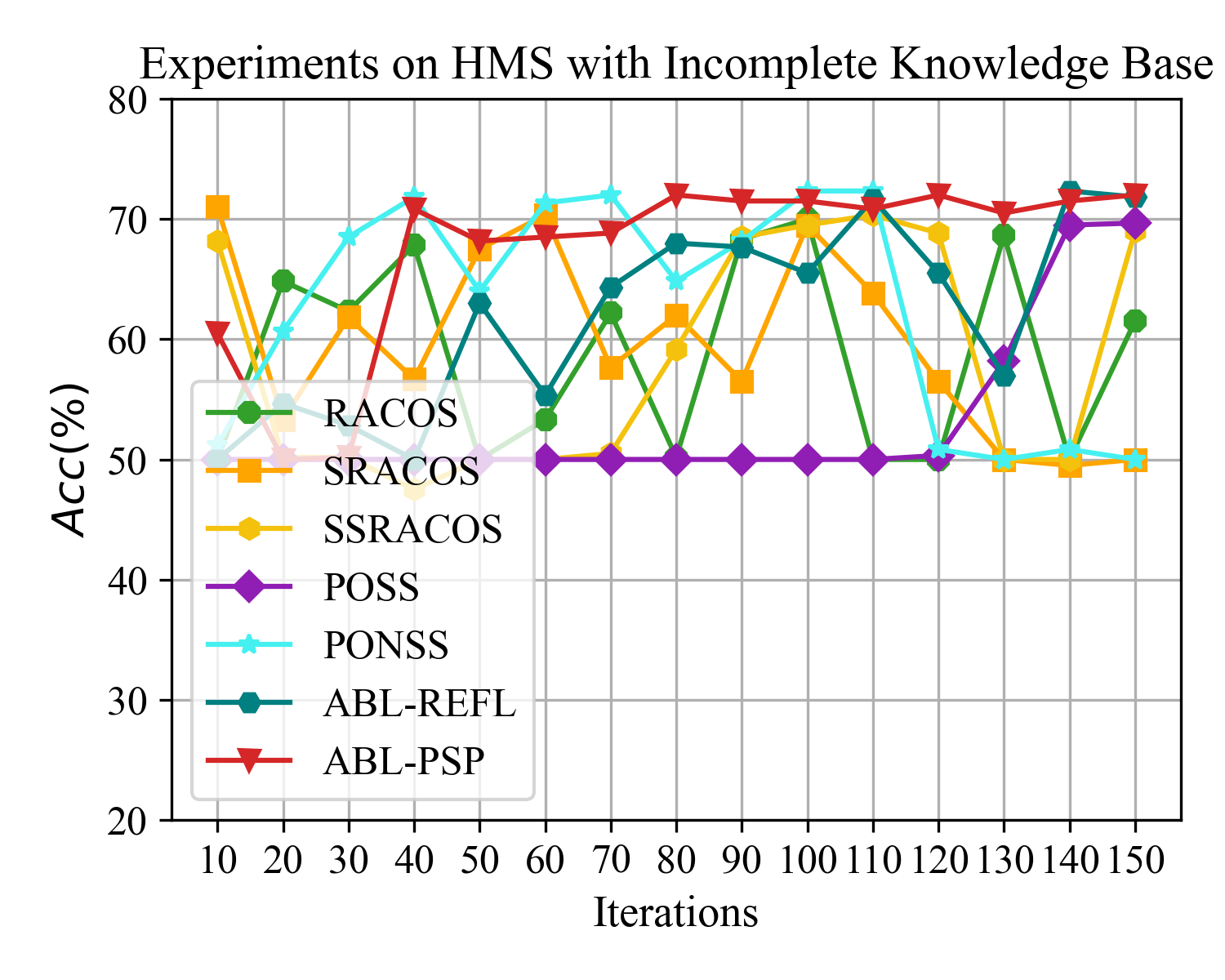}}
\caption{The variation of accuracy on the HMS dataset with the incomplete knowledge base.}
\label{HMS_Incompleted}
\end{center}
\vskip -0.2in
\end{figure}
\subsection{Experiments with Complete Knowledge Base}
Experimenting on the aforementioned incomplete knowledge base can partially reflect the performance ceiling of optimization algorithms when inference succeeds. However, it cannot be guaranteed that every inference will succeed in cases where the knowledge base is incomplete and labeled information is lacking. Previously, ABL used a validation set to determine whether to retrain the model midway through iterations, which is unfair to the evaluation of optimization algorithms\cite{dai2019bridging}. Therefore, we need to consider how to evaluate optimization algorithms under the premise of excluding interference from the knowledge base. Our first approach is to use a complete knowledge base for evaluation. This minimizes the influence of the knowledge base itself and the sequence of sample inputs on the evaluation process, thereby maintaining consistency between the algorithm's ability to incorporate the knowledge base into the ML model and its evaluation performance. Some related experiments have been conducted using such complete knowledge bases in prior work ~\cite{huang2021fast}, where all possible facts and conclusions are stored in the knowledge base, thereby eliminating the influence of randomness in the reasoning process on performance. However, this comparison method is utopian in practical applications due to substantial costs to construct a complete knowledge base, which is impossible on complex tasks. And if a complete knowledge base truly exists, then the optimization algorithm is no longer needed, and instead, just string matching algorithms would suffice, leading to a logical paradox: when we can evaluate the optimization algorithm, it means we no longer need it; when we need the optimization algorithm, it means we cannot evaluate it.

We have pre-built complete knowledge bases for the DBA, RBA, and HMS tasks respectively and then conducted the experiments. The experimental results are shown in \cref{DBA_Complete,RBA_Complete,HMS_Completed}.
\begin{figure}[htbp]
\vskip 0.2in
\begin{center}
\centerline{\includegraphics[scale=1.0]{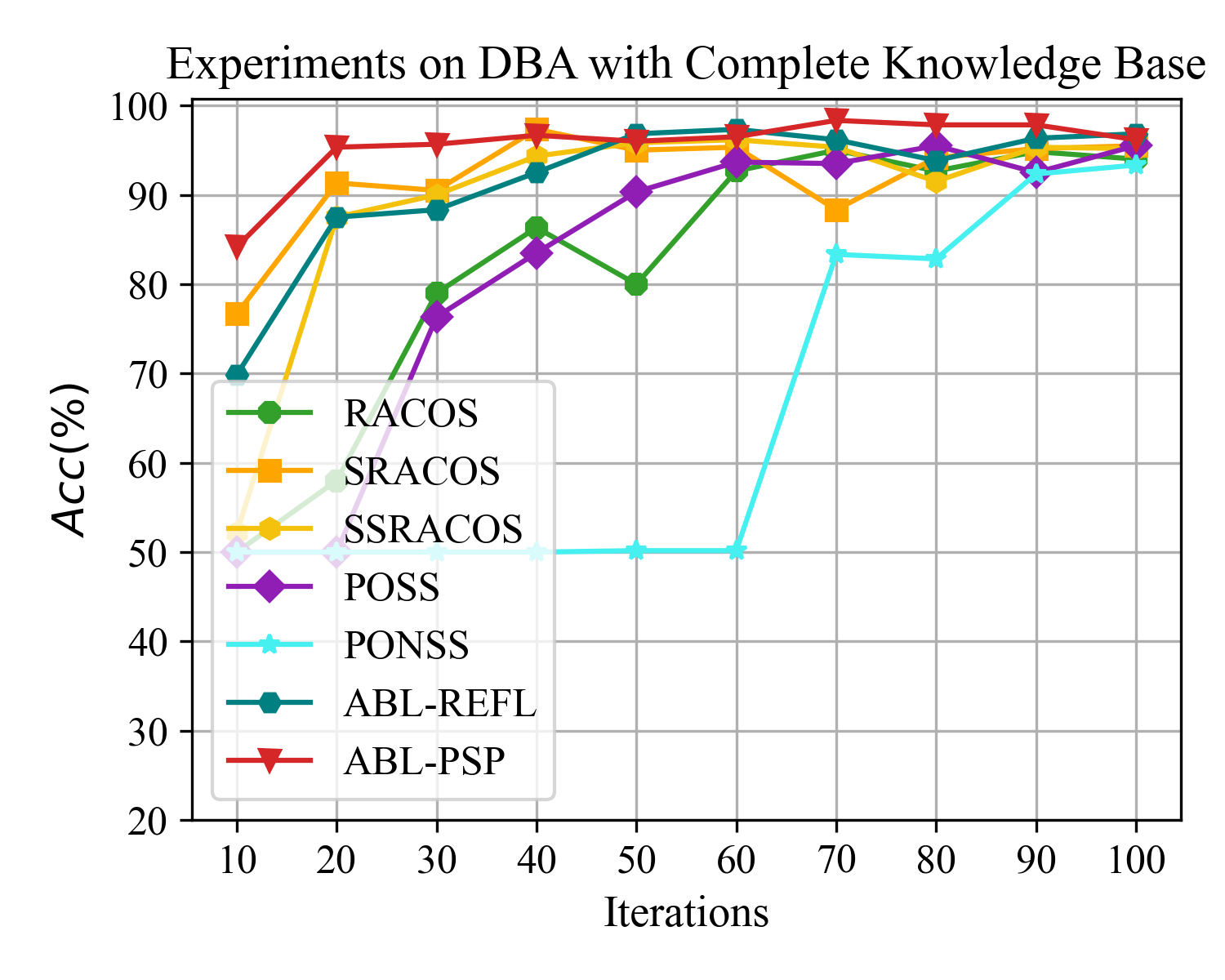}}
\caption{The variation of accuracy on the DBA dataset with the completed knowledge base.}
\label{DBA_Complete}
\end{center}
\vskip -0.2in
\end{figure}
\begin{figure}[htbp]
\vskip 0.2in
\begin{center}
\centerline{\includegraphics[scale=1.0]{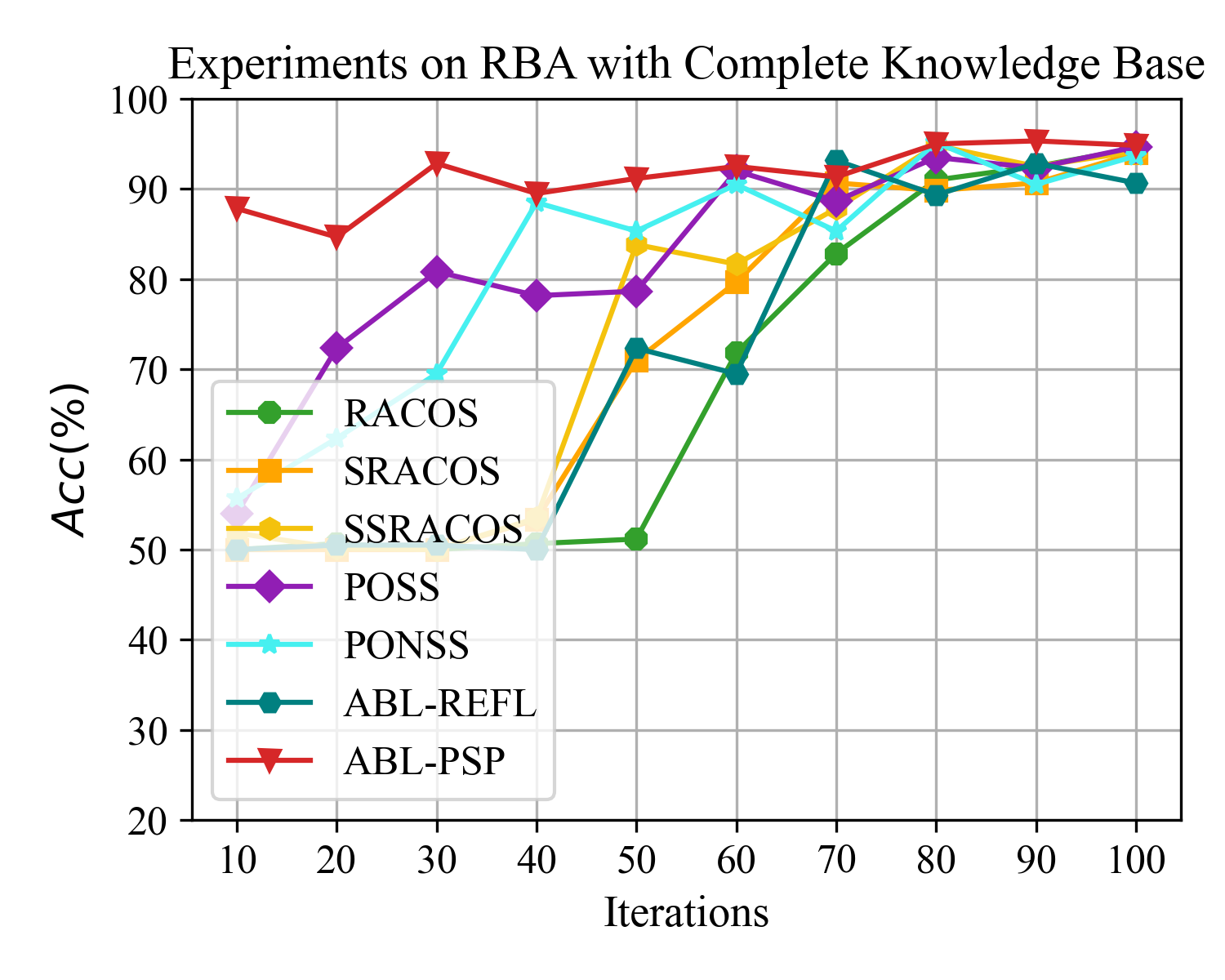}}
\caption{The variation of accuracy on the RBA dataset with the completed knowledge base.}
\label{RBA_Complete}
\end{center}
\vskip -0.2in
\end{figure}
\begin{figure}[htbp]
\vskip 0.2in
\begin{center}
\centerline{\includegraphics[scale=1.0]{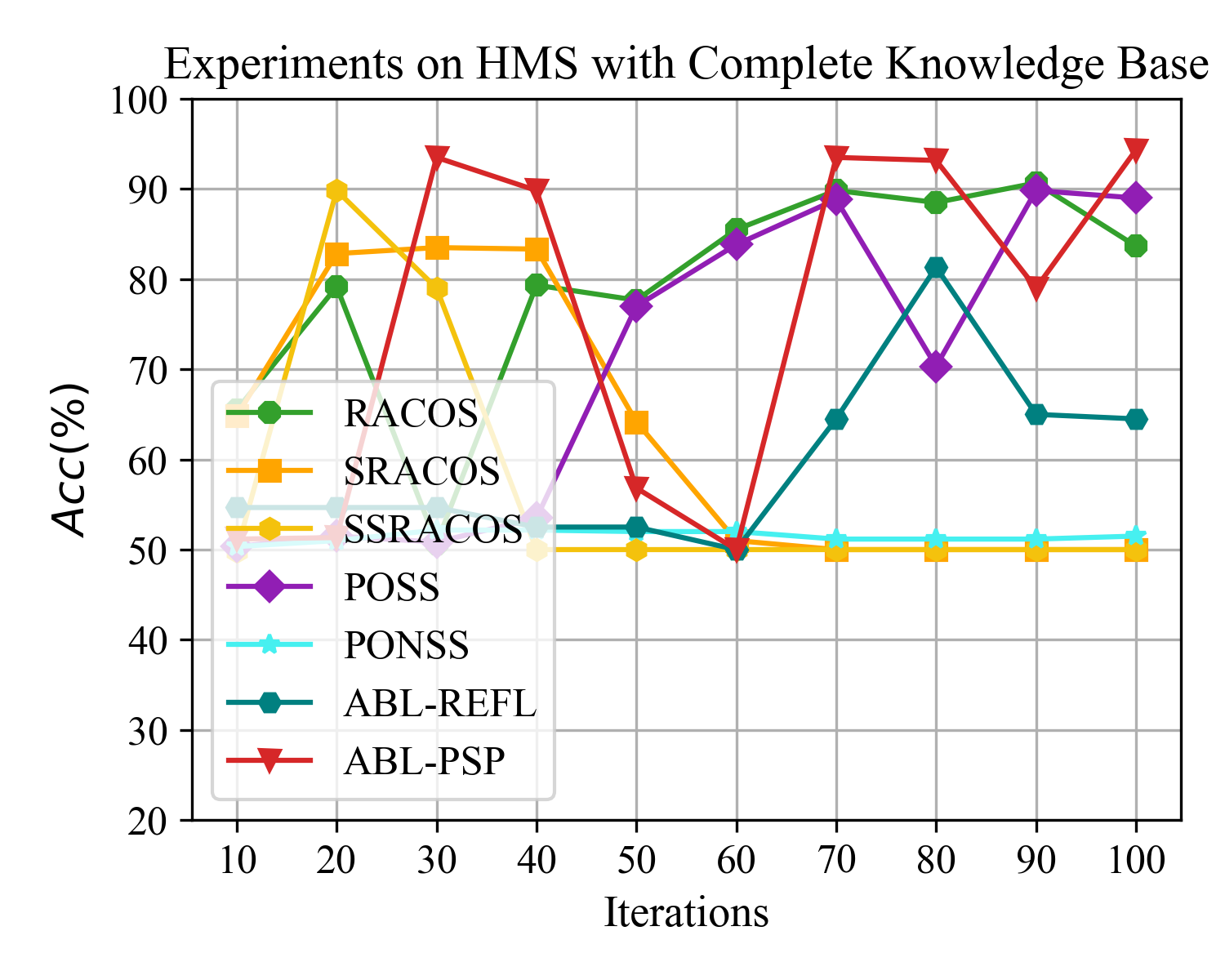}}
\caption{The variation of accuracy on the HMS dataset with the completed knowledge base.}
\label{HMS_Completed}
\end{center}
\vskip -0.2in
\end{figure}
\subsection{Experiments on Rule Generation}
While a complete knowledge base may alleviate some fairness issues, it is ineffective for most incomplete scenarios. Therefore, we have opted for new metrics that no longer measure the algorithm's merits solely based on final performance but rather assess how well optimization can align the ML model with the knowledge base. 
\begin{figure}[htbp]
\vskip 0.2in
\begin{center}
\centerline{\includegraphics[scale=1.0]{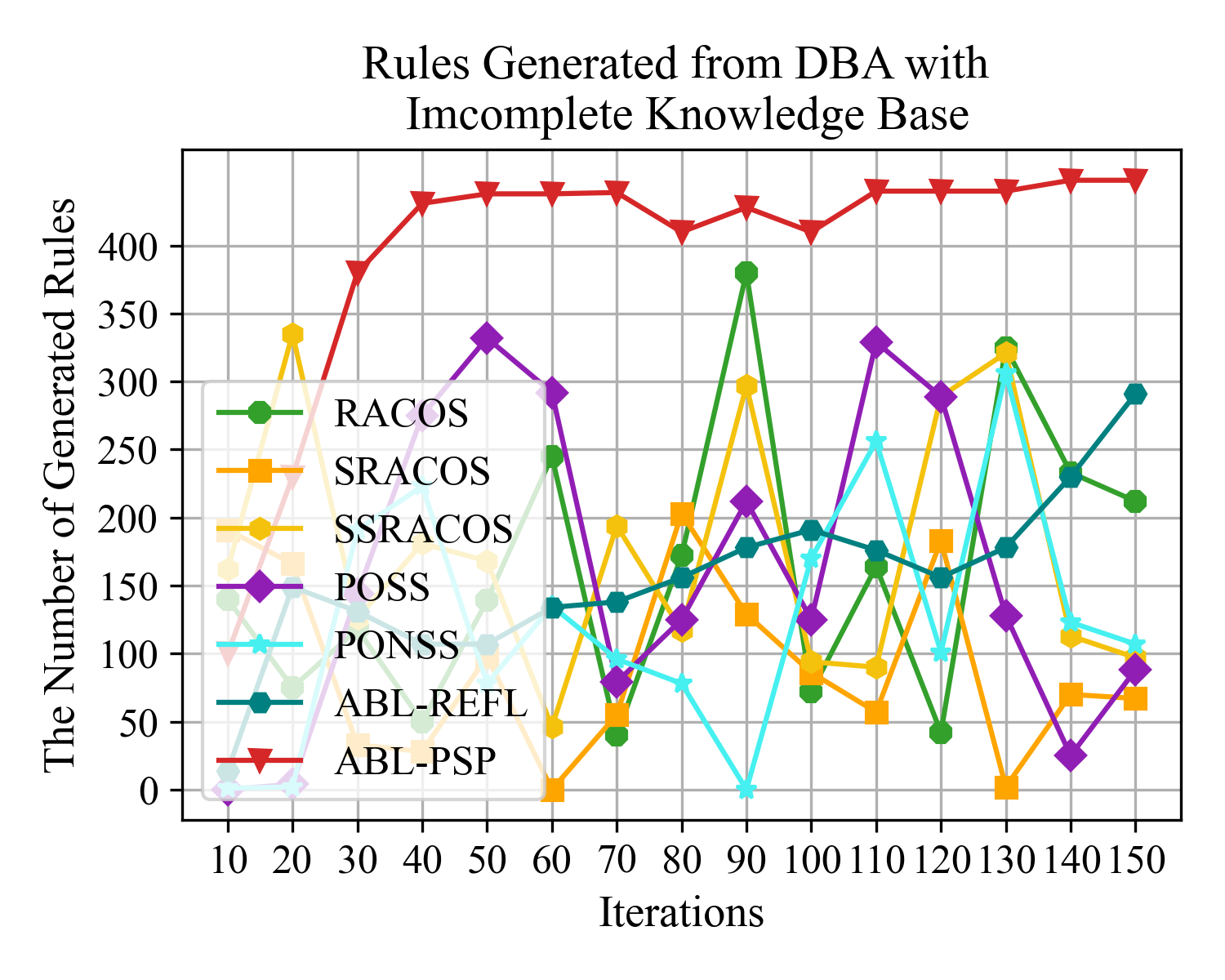}}
\caption{The variation of the number of generated rules on the DBA dataset with the incomplete knowledge base.}
\label{DBA_Incompleted_Rule}
\end{center}
\vskip -0.2in
\end{figure}

\begin{figure}[htbp]
\vskip 0.2in
\begin{center}
\centerline{\includegraphics[scale=1.0]{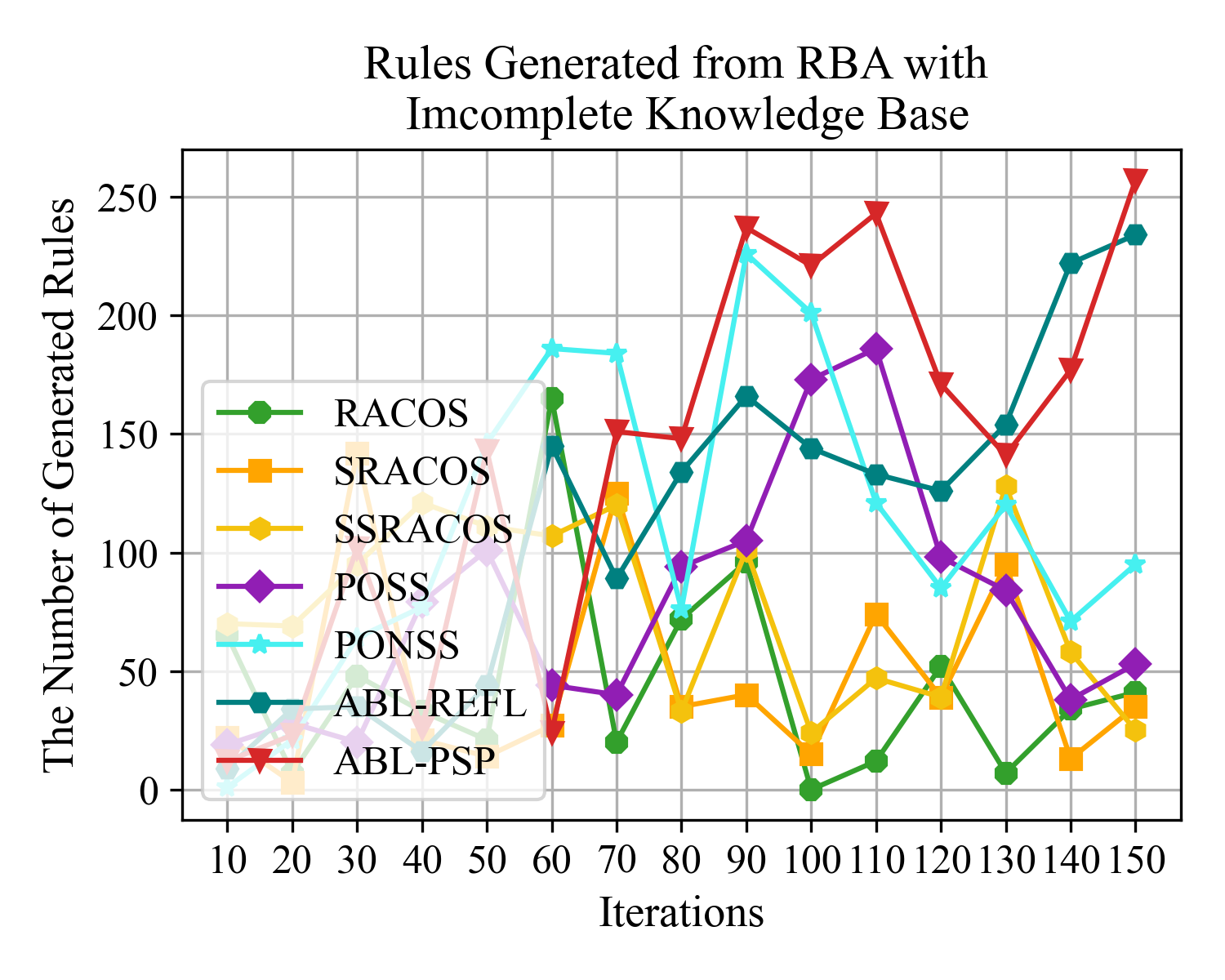}}
\caption{The variation of the number of generated rules on the RBA dataset with the incomplete knowledge base.}
\label{RBA_Incompleted_Rule}
\end{center}
\vskip -0.2in
\end{figure}

\begin{figure}[htbp]
\vskip 0.2in
\begin{center}
\centerline{\includegraphics[scale=1.0]{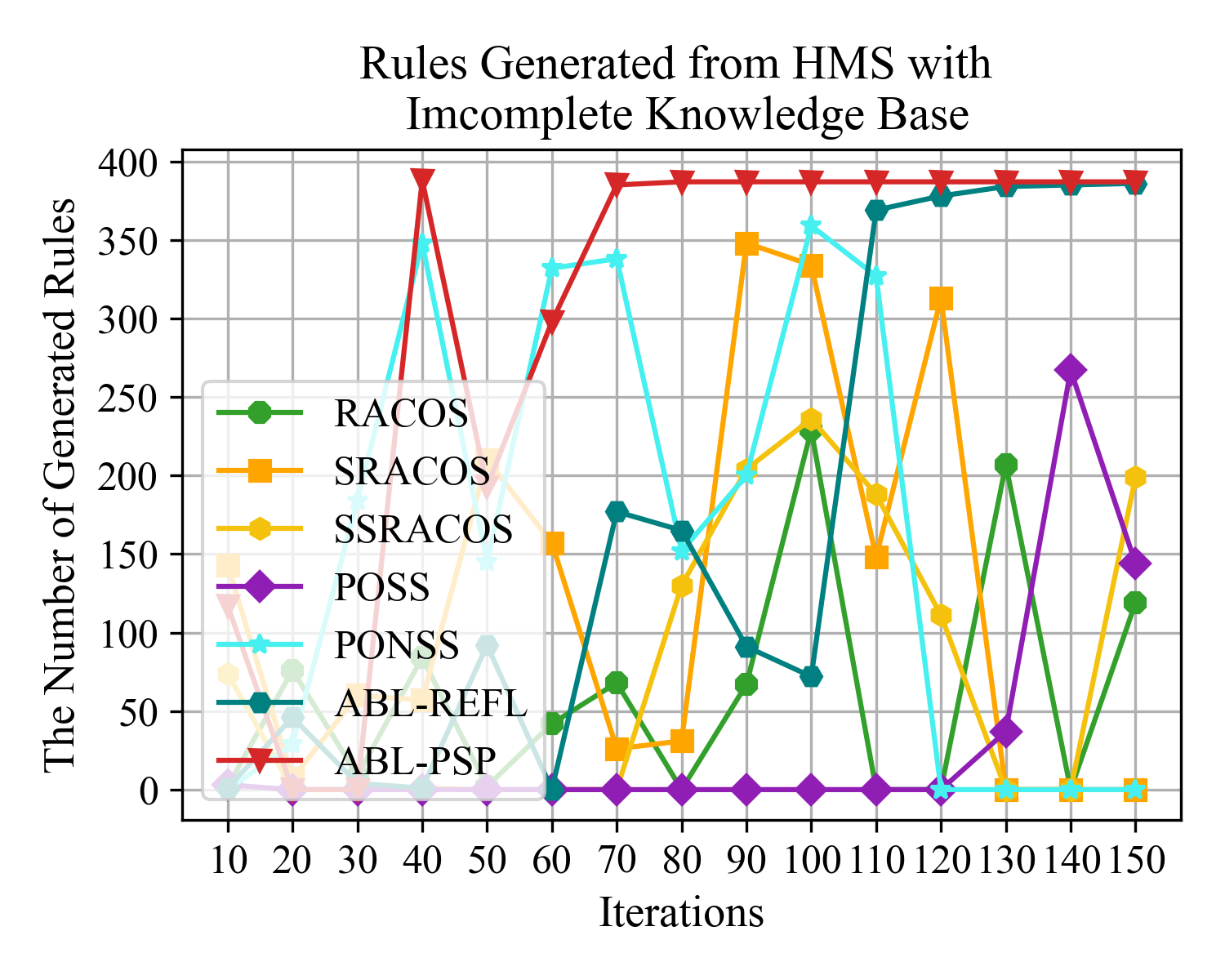}}
\caption{The variation of the number of generated rules on the HMS dataset with the incomplete knowledge base.}
\label{HMS_Incompleted_Rules}
\end{center}
\vskip -0.2in
\end{figure}
Hence, what we should primarily evaluate is the degree of consistency between the knowledge base and the machine learning model under the influence of the optimization algorithm. Specifically, after the ABL process concludes, we combine the data with the knowledge base in batches to form symbols through perception and generate new rules. We measure the degree to which the knowledge base is absorbed by the perception model by the number of successfully generated new rules. This is precisely the essence of optimization algorithms. 

We group every $s$ sample sequences and input their perceptual results into the knowledge base to count how many new rules can be generated, thereby evaluating the efficiency of the optimization algorithm. We conducted experiments with $s=3$ on 3 datasets.

This evaluation strategy allows the optimization algorithm in ABL to be evaluated independently of machine learning and logical reasoning and can serve as the most important and objective basis for evaluating the performance of optimization algorithms. The experimental results are shown in \cref{DBA_Incompleted_Rule,RBA_Incompleted_Rule,HMS_Incompleted_Rules}.

The results indicate that ABL-PSP, can integrate external knowledge into machine learning models at the fastest rate.
\section{Conclusion}
In this paper, we confront two major challenges in integrating perception and reasoning models: the problem of extensive trial and error and the issue of converting continuous and discrete variables. Building upon ABL, we propose a solution that alleviates past algorithmic deficiencies in perception information, symbol relationships, and experience accumulation through sequence-based symbol perception built upon sample perception. This approach endows ABL with meta-reasoning capability based on symbol sensitivity akin to human experts, achieving promising results with only a few trial-and-error attempts. Additionally, we introduce probability as a bridge between continuous and discrete variable conversion, presenting an efficient and rigorously complete algorithm for converting a continuous probability sequence into discrete Boolean sequences.

\Acknowledgements{}

\Supplements{Appendix A.}

\bibliography{scis_paper}

\end{document}